%% file: example_paper.tex
\documentclass{article}

\usepackage{microtype}
\usepackage{graphicx}
\usepackage{subcaption}
\usepackage{booktabs} 
\usepackage{amsthm} 

\usepackage{hyperref}

\usepackage[accepted]{icml2026}

\renewcommand{\comment}[1]{}

\usepackage{caption}
\usepackage{hyperref}
\usepackage{url}
\usepackage{booktabs}
\usepackage{multicol}
\usepackage{multirow}
\usepackage{wrapfig}
\usepackage{algorithm}
\usepackage{algorithmic}
\usepackage{amsmath}
\usepackage{amssymb}
\usepackage{amsthm}
\usepackage{bm} 

\usepackage{xspace}
 \usepackage{graphicx}

\newcommand{\ie}{\emph{i.e.,}\xspace}

\newcommand{\eg}{\emph{e.g.,}\xspace}

\newcommand{\wo}{\emph{w/o}\xspace}

\newcommand{\ignore}[1]{}

\usepackage[capitalize,noabbrev]{cleveref}

\theoremstyle{plain}
\newtheorem{theorem}{Theorem}[section]
\newtheorem{proposition}[theorem]{Proposition}

\theoremstyle{definition}

\theoremstyle{remark}

\usepackage[textsize=tiny]{todonotes}

\icmltitlerunning{ForesightKV: Optimizing KV Cache Eviction for Reasoning Models by Learning Long-Term Contribution}

\begin{document}

\twocolumn[
  \icmltitle{ForesightKV: Optimizing KV Cache Eviction for Reasoning Models by \\Learning Long-Term Contribution}



  \icmlsetsymbol{equal}{$\dagger$}

  \begin{icmlauthorlist}
    \icmlauthor{Zican Dong}{ruc}
    \icmlauthor{Peiyu Liu}{uibe}
    \icmlauthor{Junyi Li}{cuhk}
    \icmlauthor{Zhipeng Chen}{ruc}
    \icmlauthor{Han Peng}{ruc}
    \icmlauthor{Shuo Wang}{thu}
    \icmlauthor{Wayne Xin Zhao}{ruc,equal}
  \end{icmlauthorlist}

  \icmlaffiliation{ruc}{ Gaoling School of Artificial Intelligence, Renmin University of China}
  \icmlaffiliation{cuhk}{Department of Data Science, City University of Hong Kong}
  \icmlaffiliation{uibe}{University of International Business and Economics}
  \icmlaffiliation{thu}{Tsinghua University}

  \icmlcorrespondingauthor{Wayne Xin Zhao}{batmanfly@gmail.com}

  \icmlkeywords{Machine Learning, ICML}

  \vskip 0.3in
]



\printAffiliationsAndNotice{$\dagger$ Corresponding author.}  

\begin{abstract}
Recently, large language models (LLMs) have shown remarkable reasoning abilities by producing long reasoning traces. However, as the sequence length grows, the key-value (KV) cache expands linearly, incurring significant memory and computation costs. Existing KV cache eviction methods mitigate this issue by discarding less important KV pairs, but often fail to capture complex KV dependencies, resulting in performance degradation. To better balance efficiency and performance, we introduce ForesightKV, a training-based KV cache eviction framework that learns to predict which KV pairs to evict during long-text generations.
We first design the Golden Eviction algorithm, which identifies the optimal eviction KV pairs at each step using future attention scores. These traces and the scores at each step are then distilled via supervised training with a Pairwise Ranking Loss. Furthermore, we formulate cache eviction as a Markov Decision Process and apply the GRPO algorithm to mitigate the significant language modeling loss increase on low-entropy tokens. 
Experiments on AIME2024 and AIME2025 benchmarks of three reasoning models demonstrate that ForesightKV consistently outperforms prior methods under only half the cache budget, while benefiting synergistically from both supervised and reinforcement learning approaches.
Code is available at \url{https://github.com/RUCAIBox/ForesightKV}.
\end{abstract}
\input{sections/introduction}
\input{sections/observations}
\input{sections/method}

\input{sections/experiments}
\input{sections/related_work}
\input{sections/conclusion}

\section*{Acknowledgments}
This work was partially supported by the National Natural Science
Foundation of China under Grant No. 92470205 and 62506077, and Beijing Major Science and Technology Project under Contract No. Z251100008425002.  Xin
Zhao is the corresponding author.

\bibliography{example_paper}
\bibliographystyle{icml2026}

\newpage
\appendix
\onecolumn
\input{sections/appendix}
\end{document}

%% file: sections/introduction.tex
\section{Introduction}
\label{sec:intro}

Recently, large language models (LLMs), especially the reasoning models, have demonstrated exceptional long-context generation capacities~\citep{gemini-arxiv-2025-gemini,deepseek-arxiv-2025-r1,zhao-arxiv-2023-survey,minimax-arxiv-2025-m1}.
With the extension of context windows and advancements in large-scale reinforcement learning, these models can generate complex and long reasoning paths for reasoning tasks, enabling them to solve problems through step-by-step reasoning~\citep{deepseek-arxiv-2025-r1,Shao-arxiv-2024-GRPO,peng-iclr-2024-yarn}. However, as the model's output length increases, the size of the model's internal key-value (KV) cache grows linearly, leading to decreased generation speed and increased memory overhead. Taking Qwen3-4B~\citep{qwen-arxiv-2025-qwen3} as an example, at a sequence length of 32K, the KV cache storage for a single instance consumes 4.5 GB with the precision of BFloat16, severely limiting the number of concurrent batches. Furthermore, since decoding is inherently memory-bound, loading an excessively large KV cache incurs significant latency~\citep{yuan-arxiv-2024-survey}.

To address the overhead of the KV cache in long texts, numerous KV cache compression methods have been proposed, which reduce costs by either permanently reducing the number of KV pairs or reducing the overhead of storing a single KV pair~\citep{li-nips-2024-snapkv,zhang-nips-2023-h2o,liu-icml-2024-kivi,wang-arxiv-2024-merge,chang-arxiv-2024-palu}. Among them, the majority are designed for long-input tasks, with only a few studies being applicable to long-text generation. During the generation process, these methods typically employ elaborately designed rules (\eg attention scores, features of the KV pairs, and their positions) to estimate the importance of KV pairs and permanently evict unimportant ones at each eviction step~\citep{cai-arxiv-2025-rkv,zhang-nips-2023-h2o,xiao-iclr-2024-streamingllm,wu-arxiv-2024-scope}.  Thus, in spite of the linearly growing KV cache of a standard Transformer, under the fixed-budget eviction setting the retained cache is always bounded by the preset budget throughout decoding. However, these training-free methods are often insufficient to take all the complex patterns across different attention heads into account, typically leading to suboptimal performance. In contrast, other works employ training-based methods to perform a one-time evaluation of KV pair importance for eviction~\citep{lancucki-arixv-2025-inference}, which fails to capture the dynamic nature of their importance at different stages of the sequence.

To achieve a better tradeoff between generation quality and efficiency in the KV cache eviction process, we first investigate the behavior of Qwen3-4B~\citep{qwen-arxiv-2025-qwen3} on questions and reasoning traces generated by itself. We observe that KV pairs across different attention heads often exhibit different patterns, which can be categorized into three main types, \ie global, position-dependent, and semantic-dependent (as shown in Figure~\ref{fig:KV_patterns}). Among them, due to the inherent properties of reasoning data, semantic-dependent patterns exhibit greater complexity compared to the other two types, manifesting features such as block-wise attention structures and dynamic variations. This makes it difficult to capture them with rule-based algorithms or methods that determine importance in a single pass~\citep{zhang-nips-2023-h2o,lancucki-arixv-2025-inference}. 
In addition, we observe that the losses of low-entropy tokens sharply change due to the absence of critical KV pairs, resulting in factual errors that may distort subsequent reasoning. 
{These observations highlight the necessity of eviction methods that can adaptively track semantic dependencies and mitigate the significant loss increase of low-entropy tokens during generation.}

In this work, we propose ForesightKV, a method that learns to predict the long-term contribution of each KV pair and adaptively evicts less important ones during generation.
The key idea is to train a lightweight scoring model that estimates the dynamic importance of each KV pair and guides eviction decisions. To achieve this, ForesightKV employs a two-stage training paradigm: supervised learning with constructed future importance labels, followed by reinforcement learning to refine eviction policies.
We first introduce the~\emph{Golden Eviction }algorithm to construct supervision labels. Specifically, we partition the attention matrix of a sequence into fixed-length blocks along the query dimension and aggregate attention scores within each block and attention heads within the same group to compute block scores. For each eviction step, we employ the maximum block scores of future steps as future scores and discard pairs with the lowest future scores,
thus minimizing the impact of eviction. The scoring model is then trained with these labels using a pairwise ranking loss to capture relative eviction preferences. 
Subsequently, we model the KV cache eviction as a Markov Decision Process (MDP) and optimize the scoring models with 
reinforcement learning. On a full reasoning sequence, the scoring model selects eviction actions at each step, which results in different eviction traces.  We formulate the sequence reward based on the MSE loss of language modeling losses before and after KV cache eviction for tokens that are initially predicted with low entropy and with large loss increases. The scoring models are then trained using the GRPO algorithm to maximize this cumulative reward.
Notably, our framework trains solely the lightweight scoring models with no backward operations or parameter updates on the LLMs.


To evaluate the effectiveness of our method, we compare ForesightKV with other KV cache eviction methods on three reasoning LLMs using two math benchmarks, \ie AIME2024 and AIME2025. Experiments show that our method can achieve better results than the baselines with even only half the KV cache. Specifically, ForesightKV preserves $92\%$ and $99\%$ of the original model's performance under 2K and 4K budget constraints, respectively. Furthermore, our approach yields substantial throughput improvements while presenting superior generalization capabilities on similar long-form generation tasks.





%% file: sections/observations.tex
\section{Empirical Study}
\label{sec:observation}

\subsection{Attention Patterns During Long-Text Reasoning}
\label{sec:importance_drift}
\begin{figure*}[htb]
    \centering
    \includegraphics[width=\textwidth]{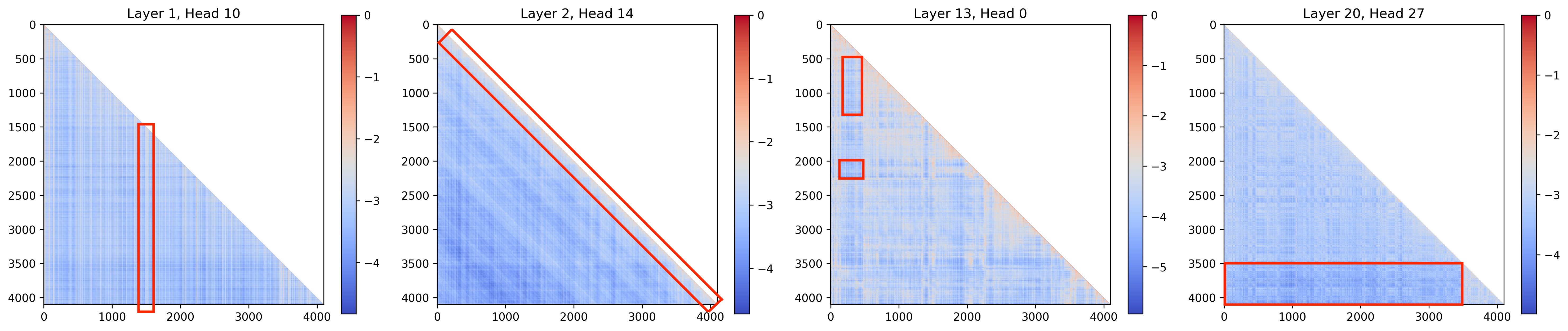}
    \caption{KV patterns in Qwen3-4B, including three patterns: (A) Global; (B) Position-dependent; (C) and (D) Semantic-dependent.}
    \label{fig:KV_patterns}
\end{figure*}

To investigate the properties of the KV cache during long-context reasoning, we employ Qwen3-4B~\citep{qwen-arxiv-2025-qwen3} to generate responses to questions from the STILL dataset~\citep{min-arxiv-2024-imitate}. We then select a representative response to calculate attention scores and visualize the attention maps of three attention heads, as illustrated in Figure~\ref{fig:KV_patterns}.
Consistent with the findings of MInference 1.0~\citep{jiang-nips-2024-minference}, we observe distinct attention patterns across different attention heads. Based on the dynamic characteristics of these attention patterns, we categorize the KV pairs into three types, namely global, position-dependent, and semantic-dependent.

$\bullet$ \emph{Global}: Global KV pairs manifest as vertical lines in the attention map. This visual pattern indicates these pairs receive a high attention weight from most queries.

$\bullet$ \emph{Position-dependent}: In Transformer decoders, the attention mechanism often demonstrates locality, with most of the attention allocated to tokens near the query~\citep{xiao-iclr-2024-streamingllm}. As the decoding length increases, attention to earlier KV pairs decreases, reducing their overall importance.

$\bullet$ \emph{Semantic-dependent}: In contrast to the other two types, semantic-dependent KV pairs exhibit more complex properties. As shown in Figure~\ref{fig:KV_patterns} (C), the attention map often displays a block-wise pattern due to inherent structural features in long-text reasoning data. In this case, attention is primarily concentrated on KV pairs within the current block and in previously semantically related blocks. As decoding moves into subsequent blocks, the high-attention regions may shift accordingly. Moreover, after a specific query, KV pairs that previously received high attention may become permanently irrelevant, as illustrated in Figure~\ref{fig:KV_patterns} (D).

Given the complexity of these attention mechanisms, previous KV cache eviction algorithms often fail to capture such intricate patterns. For instance, at each eviction step, SnapKV~\citep{li-nips-2024-snapkv} uses the last few tokens as an observation window and utilizes their attention scores to evaluate the importance of KV pairs. However, many semantic-dependent KV pairs will often exhibit different importance as the semantics change. Therefore, SnapKV often discards tokens that have little semantic association with the tokens of that window but are subsequently important, causing severe performance loss. Furthermore, these patterns are not mutually exclusive and can co-occur within a single attention head, posing an additional challenge for KV cache eviction design. In Appendix~\ref{app:attention_pattern}, we present the ability of different methods to capture attention characteristics.





\subsection{Large Shifts in Loss for Low-Entropy Tokens}
\label{sec:entropy}

In reasoning tasks, the entropy of LLMs plays a crucial role. High-entropy tokens (tokens with top-20\% entropy) often mark decision points between different reasoning paths, while low-entropy tokens (other tokens) make up the more detailed and deterministic parts of the reasoning process~\citep{Wang-arxiv-2025-entropy}. 
To investigate how these distinct token types behave, {we employ Qwen3-4B~\citep{qwen-arxiv-2025-qwen3} to compute the loss on given sequences and compare the changes in loss for high-entropy and low-entropy tokens with R-KV~\citep{cai-arxiv-2025-rkv} with the budgets of $1024$ tokens across math, coding, and summarization tasks}. As shown in Table~\ref{tab:entropy_error}, low-entropy tokens are influenced largely by the KV cache eviction, which is reflected as a greater proportional increase in the loss. We infer that this may be due to the eviction of contextual information, which is crucial for maintaining the low entropy of these tokens~\citep{qiu-naacl-2025-entropy}. 
Furthermore, we analyze prediction errors on low-entropy tokens and find they often involve numbers, symbols, and entities that have appeared in the previous context (see Appendix~\ref{app:entropy}), whose factual mistakes can accumulate over long sequences and distort subsequent reasoning. Thus, it is very important to consider these low-entropy tokens for KV cache eviction.

\begin{table}[htb]
    \centering
    \captionof{table}{Loss ratio of tokens with different entropies after eviction.}
    \label{tab:entropy_error}
    \resizebox{0.9\linewidth}{!}{
    \resizebox{\linewidth}{!}{
    \begin{tabular}{lccc}
    \toprule
    Entropy & Math & Code & Summarization\\\midrule
    Low-Entropy & $+147\%$ & $+75\%$ & $+187\%$ \\
    High-Entropy & $+52\%$ & $+1\%$ & $+142\%$ \\
    \bottomrule
    \end{tabular}}}
\end{table}

%% file: sections/method.tex
\section{ForesightKV}
\label{sec:method}
\begin{figure*}[htb]
    \centering
    \includegraphics[width=\textwidth]{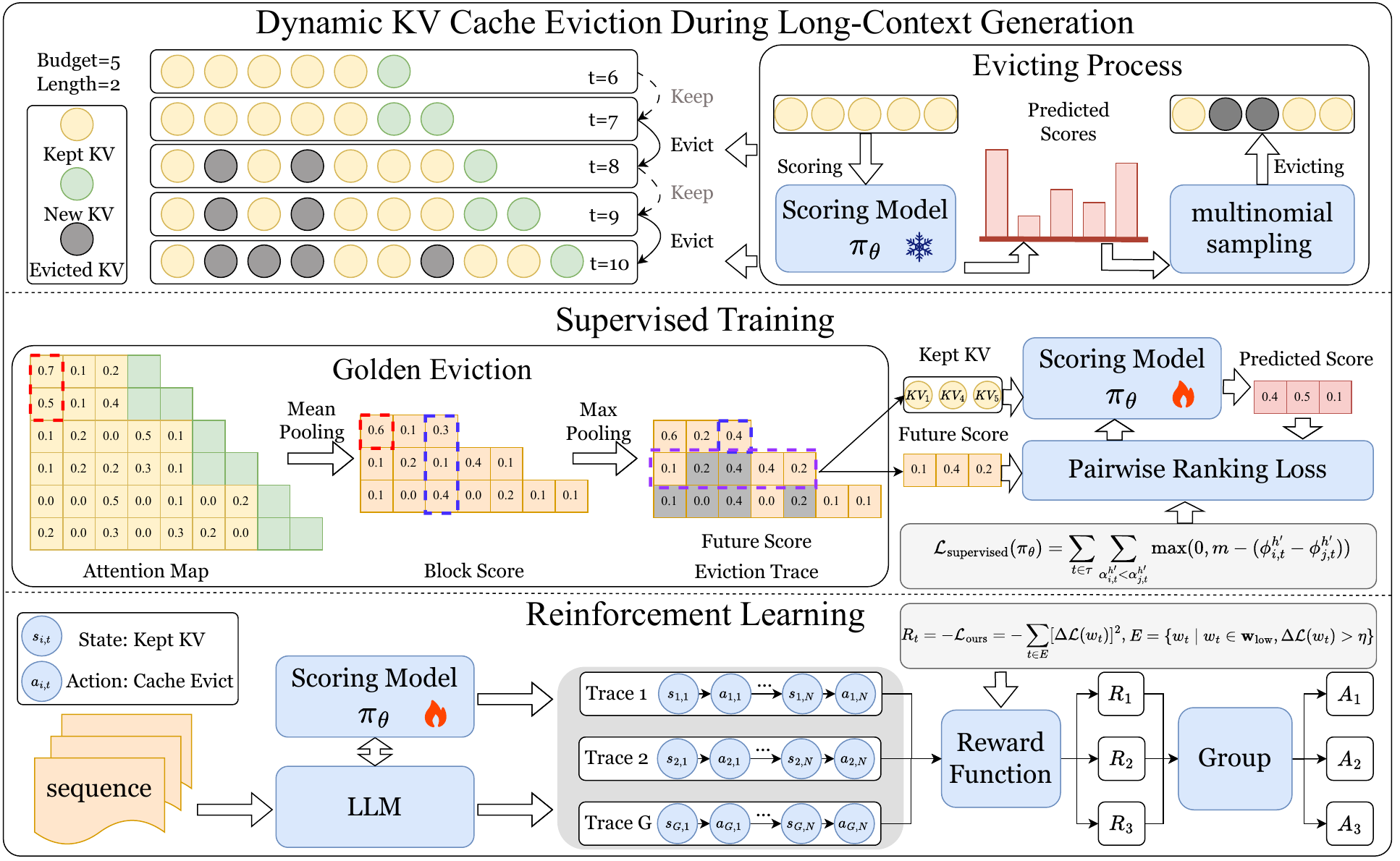}
    \caption{Overview of ForesightKV.  ForesightKV uses a scoring model to guide dynamic KV cache eviction during long-context generations, which is trained through supervised learning and then reinforcement learning. The dashed box indicates pooling.}
    \label{fig:placeholder}
    
\end{figure*}

To effectively manage the KV cache in long-context generation, we introduce \textbf{ForesightKV}, a training-based eviction method 
that balances memory efficiency with generation quality.
The key idea is to learn a scoring model to predict the long-term contribution of KV pairs and guide eviction decisions under a fixed memory budget. Specifically, our approach begins by formally defining the compression process during long-context generation with scoring models in our framework (Section~\ref{sec:task_define}). We then propose a two-stage training pipeline consisting of supervised training~(Section~\ref{sec:sft}) and reinforcement learning~(Section~\ref{sec:rl}) to optimize the scoring models' long-term prediction capacities.
A flowchart of our method is presented in Figure~\ref{fig:placeholder}.

\subsection{Dynamic KV Cache Eviction During Long-Context Generation}
\label{sec:task_define}

{In general, we define a dynamic compression process in our framework, parameterized by two hyperparameters:}
a cache budget $B$, which specifies the target number of KV pairs to retain, and an eviction length $L$, which determines the frequency of the eviction process. The eviction is triggered periodically: after every $L$ new tokens are generated, the cache size 
reaches $B+L$. At this point, we keep the most recent $L$ KV pairs and employ an eviction algorithm that prunes the other cache back to the budget size $B-L$ by retaining only the most important KV pairs.
Specifically, at each eviction step, we assign a scoring model $\pi_\theta$ to evaluate the importance of each KV pair.
{To balance efficiency and performance, we adopt an MLP scorer $ \pi_\theta$ to predict the importance score $\phi_n$ of the $n$-th KV pair. Motivated by the complex and diverse KV cache patterns observed in Section~\ref{sec:importance_drift}, we construct the input feature $\mathbf{x}_n$ by concatenating the key $\mathbf{k}_n$, the value $\mathbf{v}_n$, and their associated attention features $\mathbf{a}_n$, fixed-length vectors transformed from the attention scores~(see Appendix~\ref{app:architecture}).}
\begin{equation}
\mathbf{x}_n = \text{Concat}(\mathbf{k}_n, \mathbf{v}_n, \mathbf{a}_n),
\quad \phi_n = \pi_\theta(\mathbf{x}_n).
\end{equation}


Based on the importance scores $\Phi=\{\phi_n\}$, we parameterize the eviction action with a Top-K multinomial sampling strategy. Specifically, we first construct a high-confidence candidate set by selecting the $2L$ KV pairs with the lowest predicted importance scores, and then sample $L$ pairs from this set according to the normalized negative scores:
\begin{equation}
    \mathcal{D}_t =
    \text{Multinomial}_{L}\left(
    \text{Softmax}\left(\text{Top}_{2L}(-\Phi)\right)
    \right),
    \quad
    s_t' = s_t \setminus \mathcal{D}_t .
\end{equation}
This design is well aligned with the sequential and irreversible nature of KV cache eviction. Pure top-$K$ eviction behaves like greedy decoding: it is stable but can be sensitive to local ranking errors made by the scoring model. In contrast, pure multinomial sampling provides stronger exploration but is often too noisy when an erroneous eviction cannot be recovered in later steps. Top-K multinomial sampling first restricts the action space to low-importance candidates and then performs controlled sampling within this set, achieving a better balance between stability and exploration. This action parameterization improves inference-time robustness and also facilitates reinforcement learning by preserving useful stochasticity. Table~\ref{tab:sampling} demonstrates the superiority of this method over other pure alternatives.

We provide the details of the scoring model and sampling method in Appendix~\ref{app:architecture}. 
In the following, we present the two-stage training framework for the scoring models. In the first stage, we employ the ``Golden Eviction'' algorithm that leverages future information to generate optimal eviction traces, which serve as supervision for training the scoring model. In the second stage, we refine the model via reinforcement learning by formulating eviction as a Markov Decision Process. Throughout training, the LLM parameters remain frozen, and only the lightweight scoring models are updated.

\subsection{Supervised Training}
\label{sec:sft}

In our two-stage framework, we begin with a supervised training phase to equip the scoring model with the ability to identify critical KV pairs. 
{This stage is motivated by our analysis~(Section~\ref{sec:observation}), which shows that the relevance of a KV pair evolves dynamically during generation.}
{To predict the future importance, we introduce the \emph{Golden Eviction} algorithm, which constructs target labels and eviction traces. The scoring models are then trained on these labels using a Pairwise Ranking Loss.}



\paragraph{Golden Eviction.} 
As proved in Appendix~\ref{sec:theory_bound}, discarding KV pairs with the lowest attention scores has the minimal impact on the upper bound of the model’s output. Inspired by this observation, we propose the ``Golden Eviction'' algorithm.
Specifically, for a given full reasoning trace consisting of the prompt and generated response, $\mathbf{w}=\{w_1,\dots, w_T\}$, we compute the corresponding attention score matrix $\mathbf{A}^{h} \in \mathbb R^{T\times T}$ on the original model, 
where $h$ denotes head index on given layer.  Subsequently, we partition the attention matrix along the query dimension with a stride of $L$, starting from the first eviction position $B+L$, where the final block is padded to match the maximum length. We employ pooling across the query dimension and attention group to obtain the block scores $\tilde{\mathbf{a}}_{t}^{h} $ of all blocks for each kv pair:
\begin{align}
    \mathbf{a}_{t}^{h}&=\text{Pool}(\mathbf{ A}^h_{B+tL:B+(t+1)L-1,:})\in \mathbb R^T,\\
    \tilde{\mathbf{a}}_{t}^{h'}&=\text{Pool}(\{\mathbf{a}_{t}^{g(h'-1)+1}, \dots, \mathbf{a}_{t}^{gh'}\})\in \mathbb R^T,
\end{align}
where $t\in \{1,\dots,M\}$ is the number of eviction steps and $M=\lceil (T-B)/L \rceil-1$. For simplicity, we employ average pooling for the two pooling computations.
For each eviction step $t$, our objective is to minimize the impact on future attention computations. In other words, we want the evicted KV pairs to have low attention scores in all future blocks. Thus, we first compute the maximum block score over all future blocks as the future score $\alpha_{i,t}^{h'} \in \mathbb{R}$ for the $i$-th KV pair, and only keep the KV pairs with the largest future scores as the retained cache $s_t^{h'}$.
\begin{equation}
    \alpha_{i,t}^{h'} = \max_{t\leq j \leq M} (\tilde {\mathbf a}_{i,j}^{h'}),~~~s_t^{h'}=\text{Top}_{B-L}({\alpha_{i,t}^{h'}}).
    \label{eq:future_score}
\end{equation}

\paragraph{Training.} Subsequently, based on the KV cache eviction traces obtained from the Golden Eviction algorithm, we employ the scoring model to compute a score for each KV pair at every eviction step. Furthermore, we frame the eviction as a ranking task and use Pairwise Ranking Loss to train the scoring models. Our objective is to ensure that the model's predicted scores for any two KV pairs are ranked in the same order of their future scores. For any pair of indices $i$ and $j$, if $\alpha_{i,t}^{h'} > \alpha_{j,t}^{h'}$, we require that $\phi_{i,t}^{h'} > \phi_{j,t}^{h'}$.
Formally, we define the training loss function as follows ($m$ is a hyper-parameter for the loss):
\begin{equation}
    \mathcal{L}_{\text{supervised}}(\pi_\theta) = \sum_{t\in \tau}\sum_{\alpha_{i,t}^{h'} < \alpha_{j,t}^{h'}} \max(0, m-(\phi_{i,t}^{h'} - \phi_{j,t}^{h'})).
\end{equation}

\paragraph{Effectiveness of Golden Eviction.} To demonstrate the effectiveness of the Golden Eviction algorithm, we compare it against three leading KV cache eviction algorithms: R-KV~\citep{cai-arxiv-2025-rkv}, SnapKV~\cite{li-nips-2024-snapkv}, and H2O~\citep{zhang-nips-2023-h2o}. Using Qwen3-4B~\citep{qwen-arxiv-2025-qwen3} models, we measure the percentage increase in model loss on sampled inference data. We evaluate these methods with the KV cache budgets of 1024 and 2048, and employ the KV cache eviction every 128 or 256 tokens. As detailed in Table~\ref{tab:golden_eviction}, Golden Eviction consistently yields significantly lower loss than the competing approaches. These results confirm the effectiveness of Golden Eviction in preserving model performance under strict memory constraints.


\begin{table}[htb]
    \caption{Comparison of model loss ratios for different methods relative to the original model under various KV cache budgets. Here, (1024,128), (2048,128), (1024,256), and (2048,256) denote KV cache budgets of 1024 and 2048, with KV cache evicts every 128 or 256 tokens.}
    \label{tab:golden_eviction}
    \centering
    \resizebox{\linewidth}{!}{\begin{tabular}{lcccc}
    \toprule
            Method&(1024,256)&  (2048,256)&  (1024,128)& (2048,128)\\\midrule
            Golden&\textbf{1.0711}&  \textbf{1.0166}&  \textbf{1.0715}& \textbf{1.0185}\\
            R-KV&1.4101&  1.1606&  1.4750& 1.1814\\
            SnapKV& 1.4091& 1.1281& 1.4214&1.1343\\
            H2O&1.2730&  1.0948&  1.4106& 1.1578\\\bottomrule
    \end{tabular}}
\end{table}

\subsection{Reinforcement Learning}
\label{sec:rl}
Despite the effectiveness of supervised training with Golden Eviction, it fails to account for the distribution shift in internal states caused by the eviction during inference. Thus, the ideal eviction trace is different from the Golden Eviction algorithm. 
To bridge this gap between the supervised training and the dynamic inference process, we further refine the model using reinforcement learning.

For KV cache eviction in long-context decoding, the retained KV cache influences both the current generation of the model and subsequent cache eviction decisions. 
Therefore, we define KV cache eviction during inference as a Markov Decision Process (MDP) and employ reinforcement learning algorithms to optimize the eviction process. Given a full reasoning trace $\mathbf{w}=\{w_1,\dots, w_T\}$, and the frozen LLM $\Theta$, our goal is to maximize the rewards by optimizing the set of scoring models $\{\pi_{\theta_{h,l}}\}$. In the following, we first define the components of the reinforcement learning framework:

$\bullet$ State (${s}_t$): The current remaining KV cache at the step $t$.

$\bullet$ Action ($ a_t$): Given a state $s_t$, the action $a_t$ is to select a subset of indices of the KV pairs $\{1,\dots, B+L\}$ to retain for the current generation step by a predefined cache budget. 

$\bullet$ Policy (${\pi}_\theta$): For each attention group, we define the scoring model as the policy model, which assigns a score to guide whether the KV pair should be retained or discarded. 

$\bullet$ Reward ($R_t$): To evaluate the quality of an eviction action, we define a sequence-level reward. Inspired by the observations in Section~\ref{sec:entropy}, we find that increases in the loss of low-entropy tokens have a substantial impact on reasoning quality. Based on this insight, we construct a subset of tokens $E$ according to two criteria: (1) the token’s original entropy falls within the bottom $80\%$ of the sequence $\mathbf w_{\text{low}}$, and (2) its loss increase from the original model ($\mathcal{L}_{\text{ori}}$) to the evicted model ($\mathcal{L}_{\text{evict}}$)
exceeds a threshold $\eta$, $\Delta \mathcal L(w_t) = \mathcal L_{\text{ori}}(w_t) - \mathcal L_{\text{evict}}(w_t)$. The subset $E$ is formulated as follows:
\begin{equation}
    E=\{ w_t \mid w_t \in \mathbf{w}_{\text{low}},  \Delta \mathcal L(w_t) > \eta \}.
\end{equation}
The reward is then defined as the average 
square of the loss increases $\Delta \mathcal L(w_t)$ across all tokens in $E$, formulated as:
\begin{align}
    R_t = -\mathcal{L}_{\text{ours}} = -\sum_{t\in E} [\Delta \mathcal L(w_t) ]^2.
\end{align}
\begin{figure*}[htb]
    \centering
    \includegraphics[width=\textwidth]{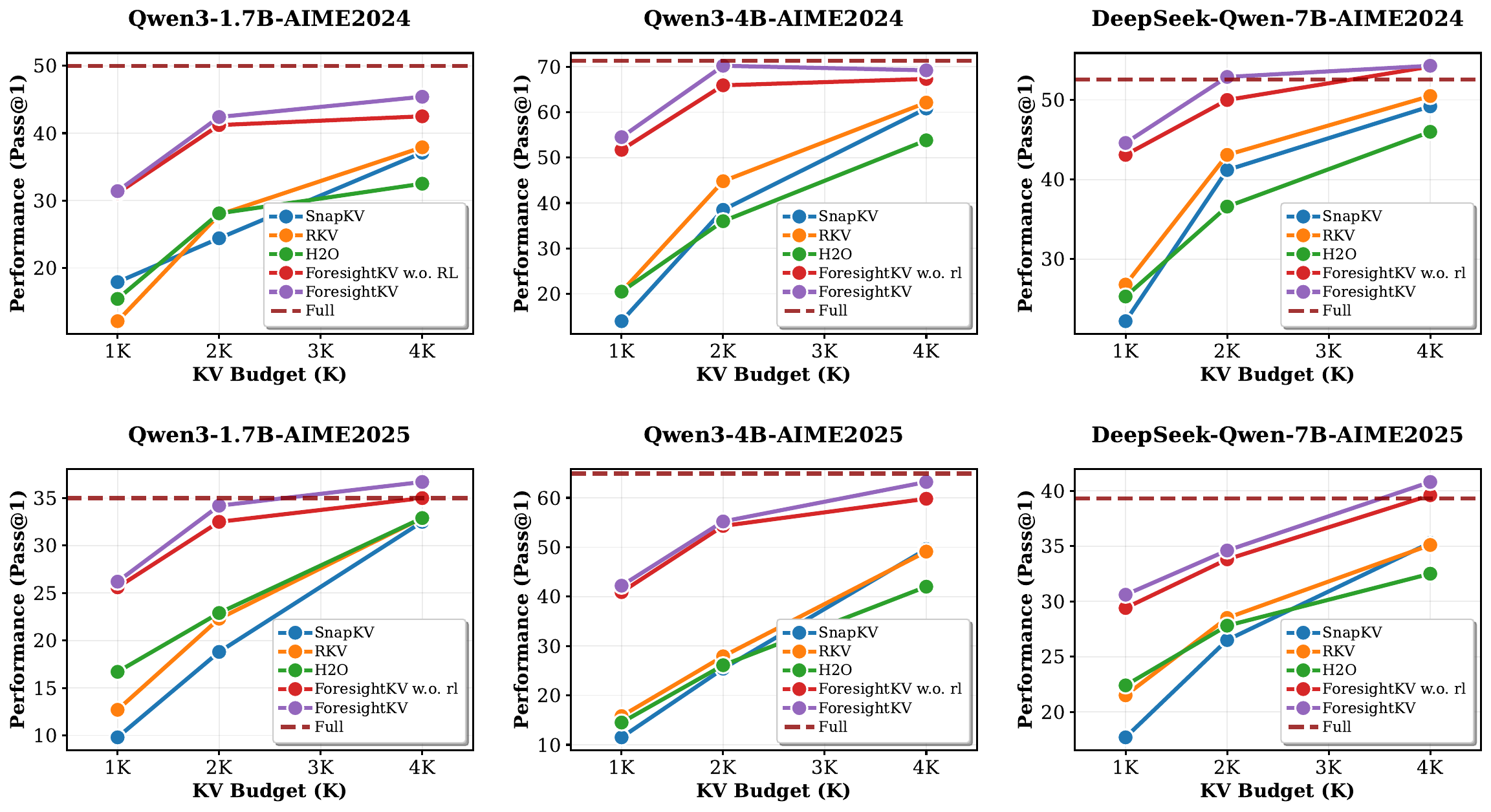}
    \caption {Comparison of ForesightKV with other KV cache eviction methods on reasoning tasks. }
    \label{fig:results}
\end{figure*}

We use the GRPO algorithm~\citep{Shao-arxiv-2024-GRPO} to train the scoring models. We initialize our policy model $\pi_\theta$ with the model obtained from supervised training, which also serves as the reference model $\pi_{\text{ref}}$. At each training step, for a given sequence $\mathbf{w}$, we sample $G$ KV cache eviction traces for all layers and heads using the Top-K multinomial sampling method with the old policy $\pi_{\theta_{\text{old}}}$, \ie $\{o_1,\dots, o_G\}$. Influenced by the different kept KV caches across traces, the same token will exhibit different hidden states, resulting in varying rewards throughout the entire sequence. Subsequently, we employ group relative normalization to compute the estimated advantage scores for these traces:
\begin{equation}
    \hat A_t = (R_t - \text{Mean}(R_t))/{\text{Std}(R_t)}.
\end{equation}
Furthermore, we broadcast these advantage scores to every eviction process for all scoring models and optimize these models together using the following training objective.
\begin{align}
    \mathcal{J}(\theta) = &\mathbb{E}_{o \sim \pi_{\theta_{\text{old}}}} \biggl[ \sum_{t=1}^{|o|} \min 
    \bigg( r_t(\theta) \hat{A}_t, \nonumber\\
    &\text{clip}(r_t(\theta), 1-\epsilon, 1+\epsilon)\hat{A}_t \bigg) \nonumber \\
    & - \beta \cdot \text{KL} \left[ \pi_{\theta}(a_t|s_t) \| \pi_{\text{ref}}(a_t|s_t) \right] \biggr],
\end{align}
where $r_t(\theta) = \frac{\pi_{\theta}(a_t|s_t)}{\pi_{\theta_{\text{old}}}(a_t|s_t)}$ is the importance sampling ratio, $\epsilon\in \mathbb R$ is the clipping threshold, and $\beta$ controls KL regularization.

%% file: sections/experiments.tex
\section{Experiments}
\label{sec:experiments}

\subsection{Experiments Setup}
\paragraph{Evaluated Models and Benchmarks.}
To evaluate the effectiveness of our method on the long-text reasoning task,  we choose DeepSeek-R1-Distill-Qwen-7B~\citep{deepseek-arxiv-2025-r1}, Qwen3-4B and Qwen3-1.7B~\citep{qwen-arxiv-2025-qwen3} as evaluated models and evaluate them on two complex math benchmarks, \ie AIME2024 and AIME2025. Following the settings of previous work~\citep{deepseek-arxiv-2025-r1,qwen-arxiv-2025-qwen3}, we set the temperature of $0.6$, top-$k$ of $20$, and top-$p$ of $0.95$. To ensure statistical reliability, we report the average scores of pass@$1$, where each benchmark is evaluated independently 32 times. For each method, we evaluate the performance under different cache budgets $B$, \eg 1024, 2048, and 4096, with the eviction length $L$ of $256$. For both models, we set the immediate size of scoring models to 16. We also select the top 512 KV pairs first and sample 256 pairs from them. We introduce the details of training setups in Appendix~\ref{app:training}.


\paragraph{Baselines.} We compare our method with three KV cache eviction methods, \ie SnapKV~\citep{li-nips-2024-snapkv}, H2O~\citep{zhang-nips-2023-h2o}, and R-KV~\citep{cai-arxiv-2025-rkv}. Following the settings in R-KV, we compress the KV cache every $L$ steps for all these methods and apply SnapKV during the long-decoding phase, where it performs dynamic compression at each step by leveraging the attention within a fixed-size window. The details of these methods are provided in Appendix~\ref{app:baselines}.

\begin{table*}[htb]
    \centering
    {
    \caption{Efficiency evaluation of ForesightKV. ``$\#$MCB'' denotes the maximum concurrent batch size.}
    \label{tab:efficiency}
    \resizebox{\textwidth}{!}{\begin{tabular}{lccccccccc}
    \toprule
         \multirow{2}{*}{Method}& \multicolumn{3}{c}{8K} & \multicolumn{3}{c}{16K} & \multicolumn{3}{c}{32K} \\
         &  $\#$MCB&  Throughput&  Ratio&  $\#$MCB&  Throughput&  Ratio&$\#$MCB&  Throughput& Ratio\\\midrule
         Full&  48 & 139.39 &$1.00\times$ & 24&  73.40&  $1.00\times$&  11&  37.73& $1.00\times$\\
         ForesightKV-1K&   96&  375.10&  $2.69\times$ & 96&  372.35&  $5.07\times$&  96&  369.43& $9.79\times$\\
 ForesightKV-2K& 70& 272.48& $1.95\times$& 70& 270.65& $3.69\times$& 70& 268.36& $7.11\times$\\
 ForesightKV-4K& 48& 198.58& $1.42\times$& 48& 196.32& $2.67\times$& 48& 193.95& $5.14\times$\\ 
 ForesightKV-8K& -& -& - & 36&  114.32& $1.56\times$& 36&  112.99& $3.03\times$\\ 
 \bottomrule
    \end{tabular}}}
\end{table*}

\subsection{Main Results}
Figure~\ref{fig:results} presents the performance comparison of ForesightKV and other KV cache eviction methods.
Firstly, our method achieves better performance under the same KV cache budgets compared to other eviction methods across various benchmarks. Compared to other KV cache eviction methods, ForesightKV achieves comparable or even better performance with only half the KV cache. For example, on the AIME2024 dataset, ForesightKV with the Qwen3-4B model and a budget of only 1K outperforms R-KV with a 2K budget (54.5 vs. 44.8). Additionally, on certain benchmarks, ForesightKV can achieve comparable performances with the original model with a budget of 4K KV pairs. This demonstrates that ForesightKV learns to better predict the long-term importance of KV pairs, thereby preserving the model's long-text reasoning capabilities more effectively.


Secondly, our method can benefit from both supervised training and reinforcement learning stages. The first stage of supervised training endows the scoring models with a powerful predictive capability for importance, significantly surpassing existing rule-based methods on benchmarks. Furthermore, the second stage of reinforcement learning enhances the scoring models by optimizing their capabilities through optimization of the overall LLMs. By optimizing the eviction policy to maximize the predictive accuracy of critical future tokens across all positions,  the model's reasoning performance can be further improved within a fixed KV cache budget.



Finally, our method exhibits robust generalization across varying budgets. Although trained with restricted budgets ($B\leq 2\text{K}$) in both stages, it seamlessly generalizes to other settings, \eg a budget of 4K. This indicates that our approach captures the intrinsic importance of KV cache entries, rather than simply overfitting to specific budget constraints.

For experiments with more models and baselines, we present them in  \ref{app:baselines_experiments} and Appendix~\ref{app:method}.


\subsection{Further Analysis}
\label{sec:ablation}

\paragraph{Efficiency of ForesightKV.} To evaluate the efficiency of ForesightKV on long-context generation tasks, we set the KV budget to 1K, 2K, and 4K tokens and compare it with the full KV cache. We measure the maximum concurrent batch size and throughput of Qwen3-4B on one A800 GPU with generation lengths of 8K, 16K, and 32K tokens. As shown in Table~\ref{tab:efficiency}, applying ForesightKV can keep a fixed batch size and throughput under different generation lengths. Additionally, it substantially increases the feasible batch size during generation and leads to significant throughput improvements. For instance, ForesightKV with a budget of $1024$ tokens achieves up to a $9.79\times$ throughput gain on long-context generation of 32K tokens. In Appendix~\ref{app:throughput}, we further demonstrate the efficiency of the scoring models and significant end-to-end speedup.


\paragraph{Reward Design.}
To optimize the KV cache eviction, we design and evaluate several distinct reward functions.  (1) $\mathcal{L}_{\text{all}}$,  $\mathcal{L}_{\text{low}}$, and  $\mathcal{L}_{\text{high}}$: the language modeling losses for all, low-entropy, and high-entropy tokens, respectively; (2) $\mathcal{L}_{\text{low,large}}$: the loss on the subset of low-entropy tokens whose loss increases by more than 1.5 post-eviction; and (3) $\mathcal{L}_{\text{ours}}$: the MSE loss on this same subset of severely impacted tokens as $\mathcal{L}_{\text{low,large}}$. We use the reinforcement learning framework on Qwen3-4B with the budget size of $1K$ and present the results in Table~\ref{tab:reward}.  Our findings reveal that naively minimizing the overall language modeling loss ($\mathcal L_{\text{all}}$) does not guarantee better performance. While targeting low-entropy tokens ($\mathcal L_{\text{low}}$) yields marginal gains, focusing on high-entropy ones ($\mathcal L_{\text{high}}$) causes significant degradation. The most effective strategy is to selectively optimize for low-entropy tokens that are most adversely affected by eviction. Using $\mathcal L_{\text{low,large}}$ and $\mathcal L_{\text{ours}}$ as reward signals reduces both the number of tokens with large loss spikes and the overall model perplexity. Notably, the MSE-based reward, $\mathcal L_{\text{ours}}$, is particularly effective at penalizing catastrophic loss increases, leading to the best final performance.

\begin{table}[htb]
    \centering
    \captionof{table}{Experiments of rewards with 1K budget.}
    \label{tab:reward}
    \resizebox{\linewidth}{!}{\begin{tabular}{lcccccc}\toprule
        Reward& - & -$\mathcal{L}_{\text{all}}$ & -$\mathcal{L}_{\text{low}}$ & -$\mathcal{L}_{\text{high}}$ & -$\mathcal{L}_{\text{low,large}}$ & -$\mathcal{L}_{\text{ours}}$ \\\midrule
        AIME24 & 51.7 & 50.6 & 53.5 & 49.6 & 53.8 & \textbf{54.5} \\
        AIME25 & 40.9 & 40.0 & 40.4 & 35.4 & \textbf{42.3} & \textbf{42.3} \\
        \bottomrule
    \end{tabular}}
\end{table}


\paragraph{Design of Model and Sampling Algorithm.} We conduct ablation studies to analyze the effects of the scoring model’s inputs and the sampling algorithms. Instead of concatenating attention features with KV representations, we evaluate the scoring model with only attention features. We also replace our sampling method (multinomial sampling from the top-K pairs) with either top-K or multinomial sampling. As shown in Table~\ref{tab:sampling}, evaluations on AIME2024 and AIME2025 with Qwen3-4B reveal that both modifications consistently degrade performance: top-K and multinomial sampling led to a significant performance drop, while only using attention features can hardly accurately predict the importance of the KV pairs.

\begin{table}[htb]
\captionof{table}{Ablation study of scoring model and sampling function with 1K budget. \emph{Attn} and \emph{KV} denote using attention features and KV representations. \textit{Top-K} and\textit{ MN} denote the Top-K and multinomial sampling methods. } 
    \label{tab:sampling}
    \centering
    \begin{tabular}{lccc}\toprule
        Input&Sampling&   AIME24& AIME25\\\midrule
        Attn+KV& Top-K+MN&   \textbf{51.7}& \textbf{40.9}\\
        Attn&Top-K+MN& 37.5&22.9\\
        Attn+KV&MN&   16.5& 13.8\\
        Attn+KV&Top-K&   46.0& 37.7\\\bottomrule
    \end{tabular}
\end{table}



\paragraph{Generalization Capabilities on Reasoning Tasks.} To evaluate the generalization capabilities of ForesightKV, we conduct experiments on reasoning benchmarks in other domains, specifically GPQA~\citep{Rein-arxiv-2023-GPQA} (science) and LiveCodeBench-V3~\citep{Jain-ICLR-2025-LiveCodeBench} (coding). The experimental results with Qwen3-4B are shown in Table~\ref{tab:generalization}. Similar to observations on math benchmarks, ForesightKV consistently outperforms all baselines on both datasets with only half the KV budgets, while the performance of ForesightKV approaches that of the original model under a limited budget.

\begin{table}[htb]
    \centering
        \caption{Generalization capacities evaluations of Qwen3-4B on GPQA and LiveCodeBench.}
    \label{tab:generalization}
    \resizebox{\linewidth}{!}{\begin{tabular}{cccccc}
    \toprule
        \multirow{2}{*}{Method} 
        & \multicolumn{3}{c}{GPQA} 
        & \multicolumn{2}{c}{LiveCodeBench} \\
     & 1K & 2K & 4K & 1K & 2K  \\
    \midrule
         Full& & 54.6& &\multicolumn{2}{c}{63.4}   \\\cmidrule{1-6}
        H2O & 25.2& 29.7& 43.1& 28.3& 42.5 \\
        SnapKV & 16.9& 34.8& 49.1& 27.0 & 48.1 \\
        R-KV & 23.0& 40.0& 50.7& 34.4 & 52.0\\\cmidrule{1-6}
        ForesightKV(\wo RL) & 44.2& 51.2& 52.4& \textbf{55.7}& \textbf{61.5} \\
        ForesightKV & \textbf{45.2}& \textbf{51.3}& \textbf{53.7}& \textbf{55.7}& {61.1}\\
    \bottomrule
    \end{tabular}}
\end{table}

\paragraph{Generalization Capabilities on Long-Input Tasks.} We further evaluate whether ForesightKV generalizes to long-input tasks on LongBench~\citep{bai-acl-2024-longbench}. These experiments are conducted with Qwen3-4B under a 1K KV budget in the non-thinking mode. Since LongBench mainly consists of long-input and short-output tasks, we perform KV cache compression only once after prefilling, rather than repeatedly evicting KV pairs during decoding. As shown in Table~\ref{tab:longbench_category}, ForesightKV achieves the best overall average among all compression methods and remains close to the full model, especially on single-document QA, multi-document QA, summarization, and code completion. Given that the scoring models were trained exclusively on math tasks, these results demonstrate the superior generalization capabilities of ForesightKV beyond the training domains.

\begin{table}[htb]
    \centering
    \caption{Performance comparison of Qwen3-4B on LongBench task categories with a 1K KV budget in the non-thinking mode. KV cache compression is performed only once after prefilling.}
{
    \label{tab:longbench_category}
\resizebox{\linewidth}{!}{\begin{tabular}{lccccc}\toprule
Task Category & Full Model & R-KV & SnapKV & H2O & ForesightKV \\\midrule
Single-document QA & 41.88 & 37.39 & 37.29 & 35.26 & \textbf{41.07} \\
Multi-document QA & 43.97 & 42.45 & 42.56 & 40.51 & \textbf{43.46} \\
Summarization & 27.48 & 24.08 & 24.97 & 21.13 & \textbf{27.39} \\
Few-shot Learning & 62.73 & 59.39 & \textbf{60.71} & 59.27 & 60.48 \\
Synthetic Tasks & 48.88 & \textbf{50.02} & 50.02 & 48.50 & 49.55 \\
Code Completion & 2.29 & 1.91 & 1.64 & 3.17 & \textbf{3.41} \\
Overall Average & 39.41 & 37.11 & 37.50 & 35.74 & \textbf{38.95} \\\bottomrule
\end{tabular}}}
\end{table}







%% file: sections/related_work.tex
\vspace{-0.1cm}
\section{Related Work}
\label{sec:related_work}

\paragraph{KV Cache Eviction.}
To mitigate the memory and computational overhead of the KV cache, which scales linearly with context length, various techniques have been proposed to enhance model efficiency, including KV cache eviction~\citep{zhang-nips-2023-h2o}, merging~\citep{wang-arxiv-2024-merge}, dynamic loading~\citep{quest-icml-2024-quest}, low-rank decomposition~\citep{chang-arxiv-2024-palu}, and quantization~\citep{hooper-nips-2024-kvquant,liu-icml-2024-kivi,liu-acl-2024-unlocking}. Among these, KV cache eviction methods exploit the inherent sparsity of attention mechanisms to permanently discard less important KV pairs based on predefined rules. These methods typically rely on heuristic rules, \eg positions, attention scores, and representations of KV~\citep{xiao-iclr-2024-streamingllm,zhang-nips-2023-h2o,cai-arxiv-2025-rkv,li-nips-2024-snapkv,wu-arxiv-2024-scope,goel-arxiv-2025-caote}. However, such heuristic-based approaches are often suboptimal for long-generation tasks. Additionally, some work uses trainable methods to discard unimportant caches~\citep{lancucki-arixv-2025-inference,zeng-arxiv-2024-attentiongate,huang-arxiv-2024-locret}. These methods make a one-time judgment on the importance of the KV cache and then permanently discard it, which is difficult to capture the changing importance as the sequence evolves. Our method is the first to jointly use supervised and reinforcement learning to optimize the eviction process. It dynamically captures the importance of KV pairs to decide on their eviction, leading to improved performance.

\paragraph{Efficient Long-Context Generation.}
With the extension of context windows and the enhancement of long-context abilities of LLMs, long-text generation tasks, particularly for long-reasoning tasks, have emerged as a focal point of research~\citep{deepseek-arxiv-2025-r1,dong-acl-2025-longred}. After reinforcement learning, the model accurately solves problems by generating a lengthy reasoning process~\citep{qwen-arxiv-2025-qwen3}. To further improve efficiency, some studies have reduced the model's generation length by designing reinforcement learning algorithms~\citep{cheng-arxiv-2025-incentivizing} and employing model merging~\citep{wu-arixv-2025-merging}. 
Other studies focus on reducing the computational cost associated with generating each token via speculative decoding~\citep{yang-arxiv-2025-longspec} or KV cache compression~\citep{cai-arxiv-2025-rkv}, without changing the generation lengths. Our work falls into the latter category, improving efficiency by reducing overhead through KV cache eviction.

%% file: sections/conclusion.tex
\section{Conclusion}
\label{sec:conclusion}

In this paper, we introduce ForesightKV, a novel two-stage training method for KV cache eviction tailored for long-context reasoning tasks. In the first stage, we design a Golden Eviction algorithm to obtain ideal eviction data, which is used to train scoring models with a Pairwise Ranking Loss. In the second stage, we model the eviction task as a Markov Decision Process and apply reinforcement learning to mitigate the significant performance loss on low-entropy tokens that can occur after eviction. Experiments on two reasoning benchmarks demonstrate that ForesightKV outperforms leading baselines with only half the cache budget. Ablation studies confirm that both the supervised and reinforcement learning stages are crucial to its success. We believe this reinforcement learning perspective opens up a promising new research direction for adaptive KV cache management in long-context reasoning.

\section*{Impact Statement}
This paper presents work whose goal is to advance the field of Machine Learning. There are many potential societal consequences of our work, none of which we feel must be specifically highlighted here.

%% file: sections/appendix.tex
\newpage

\section{Theoretical Analysis of KV Cache Eviction}
\label{sec:theory_bound}

In this section, we analyze the approximation error induced by KV cache eviction and clarify its connection to Golden Eviction. We first study the error at the level of a single future query, showing that the perturbation is controlled by the evicted attention mass. We then connect this single-query result to the future-block surrogate used by Golden Eviction and explain why this surrogate is aligned with generation quality preservation.

\paragraph{Problem Setup.}
Let the original full attention output be $\mathbf{o}_t = \sum_{i=1}^{N} a_{t,i} \mathbf{v}_i$, where $a_{t,i} = \frac{\exp(s_{t,i})}{\sum_{k=1}^N \exp(s_{t,k})}$ are the original normalized attention scores.
When evicting a set of indices $\mathcal{E}$, the remaining scores are renormalized. Let $\mathcal{S}$ be the retained set. The new output $\hat{\mathbf{o}}_t$ is:
\begin{equation}
    \hat{\mathbf{o}}_t = \sum_{i \in \mathcal{S}} \hat{a}_{t,i} \mathbf{v}_i, \quad \text{where} \quad \hat{a}_{t,i} = \frac{\exp(s_{t,i})}{\sum_{k \in \mathcal{S}} \exp(s_{t,k})}.
\end{equation}
Let $\epsilon_t = \sum_{j \in \mathcal{E}} a_{t,j}$ denote the \textit{evicted probability mass} (the sum of original attention scores of the evicted tokens). Note that the relationship between the new and original weights is $\hat{a}_{t,i} = \frac{a_{t,i}}{1 - \epsilon_t}$.

\begin{proposition}
\label{prop:renorm_bound}
(Error Bound). Assuming the value vectors are bounded by $\|\mathbf{v}_i\|_2 \le C$, the approximation error $\|\mathbf{o}_t - \hat{\mathbf{o}}_t\|_2$ is bounded by a term strictly proportional to the evicted mass $\epsilon_t$:
\begin{equation}
    \| \mathbf{o}_t - \hat{\mathbf{o}}_t \|_2 \le 2C \epsilon_t.
\end{equation}
\end{proposition}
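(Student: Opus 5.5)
We write the difference explicitly using the renormalization identity $\hat{a}_{t,i} = a_{t,i}/(1-\epsilon_t)$ for $i\in\mathcal{S}$, noted just before the statement. We assume $\epsilon_t<1$, so that the retained set carries positive mass. The output difference then splits as
\begin{equation}
\mathbf{o}_t - \hat{\mathbf{o}}_t = \sum_{j\in\mathcal{E}} a_{t,j}\mathbf{v}_j + \sum_{i\in\mathcal{S}} \Bigl(a_{t,i} - \frac{a_{t,i}}{1-\epsilon_t}\Bigr)\mathbf{v}_i = \sum_{j\in\mathcal{E}} a_{t,j}\mathbf{v}_j - \frac{\epsilon_t}{1-\epsilon_t}\sum_{i\in\mathcal{S}} a_{t,i}\mathbf{v}_i .
\end{equation}
The coefficient follows from $1 - 1/(1-\epsilon_t) = -\epsilon_t/(1-\epsilon_t)$.

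We bound the two terms separately with the triangle inequality and the hypothesis $\|\mathbf{v}_i\|_2\le C$.

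\textbf{First term.} Its norm is at most $C\sum_{j\in\mathcal{E}} a_{t,j} = C\epsilon_t$.

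\textbf{Second term.} Its norm is at most
\begin{equation}
\frac{\epsilon_t}{1-\epsilon_t}\, C\sum_{i\in\mathcal{S}} a_{t,i} = \frac{\epsilon_t}{1-\epsilon_t}\, C(1-\epsilon_t) = C\epsilon_t,
\end{equation}
because the original weights on $\mathcal{S}$ sum to $1-\epsilon_t$. The factor $(1-\epsilon_t)$ cancels exactly, which is why the bound has no $1/(1-\epsilon_t)$ blow-up.

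Adding the two bounds gives $\|\mathbf{o}_t-\hat{\mathbf{o}}_t\|_2\le 2C\epsilon_t$.

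An equivalent route is to write $\mathbf{o}_t = (1-\epsilon_t)\hat{\mathbf{o}}_t + \epsilon_t\bar{\mathbf{v}}_{\mathcal{E}}$, where $\bar{\mathbf{v}}_{\mathcal{E}}$ is the $a_{t,\cdot}$-weighted average of the evicted values. Then $\mathbf{o}_t-\hat{\mathbf{o}}_t = \epsilon_t(\bar{\mathbf{v}}_{\mathcal{E}}-\hat{\mathbf{o}}_t)$. Both vectors in the parentheses are convex combinations of values with norm at most $C$, so their difference has norm at most $2C$.

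There is no real obstacle. The only points needing care are:
\begin{itemize}
\item getting the sign and coefficient right in the decomposition;
\item the edge case $\epsilon_t=1$ (everything evicted), which we exclude because $\hat{\mathbf{o}}_t$ is then undefined. Since softmax weights are strictly positive, this case can occur only when $\mathcal{S}$ is empty.
\end{itemize}
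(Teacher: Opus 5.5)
Your proof is correct and is essentially the paper's argument: the same split $\mathbf{o}_t-\hat{\mathbf{o}}_t=\sum_{j\in\mathcal{E}}a_{t,j}\mathbf{v}_j-\frac{\epsilon_t}{1-\epsilon_t}\sum_{i\in\mathcal{S}}a_{t,i}\mathbf{v}_i$, then the triangle inequality, with each term bounded by $C\epsilon_t$ because the retained weights sum to $1-\epsilon_t$. Your convex-combination form $\mathbf{o}_t-\hat{\mathbf{o}}_t=\epsilon_t(\bar{\mathbf{v}}_{\mathcal{E}}-\hat{\mathbf{o}}_t)$ is a tidy equivalent restatement (it needs $\epsilon_t>0$, but $\epsilon_t=0$ is trivial), and your explicit exclusion of $\epsilon_t=1$ states an assumption the paper leaves implicit.
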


\begin{proof}
We decompose the original output $\mathbf{o}_t$ into the retained part and the evicted part:
\begin{equation}
    \mathbf{o}_t = \sum_{i \in \mathcal{S}} a_{t,i} \mathbf{v}_i + \sum_{j \in \mathcal{E}} a_{t,j} \mathbf{v}_j.
\end{equation}
The approximated output can be rewritten in terms of the original scores:
\begin{equation}
    \hat{\mathbf{o}}_t = \sum_{i \in \mathcal{S}} \frac{a_{t,i}}{1 - \epsilon_t} \mathbf{v}_i = \frac{1}{1 - \epsilon_t} \sum_{i \in \mathcal{S}} a_{t,i} \mathbf{v}_i.
\end{equation}
Now, let $\mathbf{h}_{\mathcal{S}} = \sum_{i \in \mathcal{S}} a_{t,i} \mathbf{v}_i$ and $\mathbf{h}_{\mathcal{E}} = \sum_{j \in \mathcal{E}} a_{t,j} \mathbf{v}_j$. The error vector is:
\begin{align}
    \mathbf{o}_t - \hat{\mathbf{o}}_t &= (\mathbf{h}_{\mathcal{S}} + \mathbf{h}_{\mathcal{E}}) - \frac{1}{1 - \epsilon_t} \mathbf{h}_{\mathcal{S}} \\
    &= \mathbf{h}_{\mathcal{E}} + \left( 1 - \frac{1}{1 - \epsilon_t} \right) \mathbf{h}_{\mathcal{S}} \\
    &= \mathbf{h}_{\mathcal{E}} - \frac{\epsilon_t}{1 - \epsilon_t} \mathbf{h}_{\mathcal{S}}.
\end{align}
Taking the norm and applying the triangle inequality:
\begin{equation}
    \| \mathbf{o}_t - \hat{\mathbf{o}}_t \|_2 \le \| \mathbf{h}_{\mathcal{E}} \|_2 + \frac{\epsilon_t}{1 - \epsilon_t} \| \mathbf{h}_{\mathcal{S}} \|_2.
\end{equation}
Using the bound $\|\mathbf{v}\| \le C$, we can infer that: 
\begin{align}
    \| \mathbf{h}_{\mathcal{E}} \|_2 &= \| \sum_{j \in \mathcal{E}} a_{t,j} \mathbf{v}_j \|_2 \le \sum_{j \in \mathcal{E}} a_{t,j} C = C \epsilon_t,\\
    \| \mathbf{h}_{\mathcal{S}} \|_2 &= \| \sum_{i \in \mathcal{S}} a_{t,i} \mathbf{v}_i \|_2 \le \sum_{i \in \mathcal{S}} a_{t,i} C = C (1-\epsilon_t).
\end{align}

Substituting these back yields the upper bound:
\begin{equation}
    \| \mathbf{o}_t - \hat{\mathbf{o}}_t \|_2 \le C \epsilon_t + \frac{\epsilon_t}{1 - \epsilon_t} \cdot C(1 - \epsilon_t) = C \epsilon_t + C \epsilon_t = 2C \epsilon_t.
\end{equation}
This confirms that the error is bounded by $2C\epsilon_t$.
\end{proof}

\paragraph{Implication of the Single-Query Bound.}
Proposition~\ref{prop:renorm_bound} is stated for a single query and shows that the perturbation of the attention output is controlled by the evicted attention mass $\epsilon_t = \sum_{j \in \mathcal{E}} a_{t,j}$. Therefore, at the single-query level, a good eviction strategy should minimize the attention mass removed from the cache.

\paragraph{From Single Queries to Future Query Blocks.}
However, the actual eviction decision in Golden Eviction is not made for one future query at a time. Instead, it is made once at eviction step $t$ and must account for a block of future queries. To bridge this gap, we next introduce a block-level surrogate and show that the future score used in Eq.~\ref{eq:future_score} provides a principled upper bound on the cumulative evicted attention mass over future query blocks.

At eviction step $t$, let $E_t$ denote the evicted set. Since Eq.~\ref{eq:future_score} is defined over future query blocks, let $Q_b$ be the $b$-th future query block and define the pooled attention from block $b$ to cached token $i$ as
\begin{equation}
    \bar a_{i,b} = \frac{1}{|Q_b|}\sum_{q\in Q_b} a_{q,i}.
\end{equation}
The corresponding block-level evicted mass is
\begin{equation}
    \bar\epsilon_{b,t} = \sum_{i\in E_t}\bar a_{i,b}.
\end{equation}
Eq.~\ref{eq:future_score} defines the future score of cached token $i$ at eviction step $t$ as
\begin{equation}
    \alpha_{i,t} = \max_{b\ge t}\bar a_{i,b}.
\end{equation}
Thus, for every future block $b\ge t$, we have $\bar a_{i,b}\le \alpha_{i,t}$. Let $H_t$ be the number of future query blocks after eviction step $t$. The cumulative future block-averaged evicted mass is then upper bounded by
\begin{align}
    \sum_{b\ge t}\bar\epsilon_{b,t}
    &= \sum_{b\ge t}\sum_{i\in E_t}\bar a_{i,b} \\
    &\le \sum_{b\ge t}\sum_{i\in E_t}\alpha_{i,t}
    = H_t\sum_{i\in E_t}\alpha_{i,t}.
\end{align}
Therefore, selecting the evicted set $E_t$ with the smallest $\sum_{i\in E_t}\alpha_{i,t}$ minimizes an upper bound on the cumulative future block-averaged evicted attention mass. Equivalently, retaining the KV pairs with the largest future scores, as done by Golden Eviction, minimizes this surrogate objective.

\paragraph{Connection to Generation Quality.}
The surrogate above is directly connected to generation quality. Returning from the block level to a particular future query position $q$, Proposition~\ref{prop:renorm_bound} implies that, under the bounded-value assumption $\|\mathbf v_i\|_2\le C$, the attention-output perturbation satisfies
\begin{equation}
    \|\mathbf{o}_q-\hat{\mathbf{o}}_q\|_2 \le 2C\epsilon_q,
\end{equation}
where $\epsilon_q$ is the evicted attention mass for query $q$. If the downstream mapping from attention outputs to logits is locally Lipschitz, then there exists a constant $L_f$ such that
\begin{equation}
    \|\mathbf{z}_q-\hat{\mathbf{z}}_q\|_2
    \le L_f\|\mathbf{o}_q-\hat{\mathbf{o}}_q\|_2.
\end{equation}
If the token loss is also locally Lipschitz with respect to logits, then there exists a constant $L_\ell$ such that
\begin{align}
    |\ell_q-\hat{\ell}_q|
    &\le L_\ell\|\mathbf{z}_q-\hat{\mathbf{z}}_q\|_2 \\
    &\le 2C L_f L_\ell \epsilon_q.
\end{align}
Therefore, Eq.~\ref{eq:future_score} minimizes the surrogate quantity $\sum_{i\in E_t}\alpha_{i,t}$, which upper-bounds the cumulative future block-level evicted attention mass up to the factor $H_t$. This upper bound propagates to future attention-output perturbation and, under the local stability assumption, to future token-loss drift. This explains why Eq.~\ref{eq:future_score}, although defined as a one-step surrogate based on future block-level attention, is well aligned with generation quality preservation.

\section{Model Architectures and Sampling Algorithm}
\label{app:architecture}

\subsection{Model Design}
In KV cache eviction, the design of a scoring model to accurately assess the importance of each key-value pair is crucial. To balance computational efficiency and accuracy, we use an MLP to score the importance of each KV pair. To more fully leverage existing information for prediction, we select two types of information as input:
\begin{itemize}
    \item \emph{Attention Features}: Inspired by H2O~\citep{zhang-nips-2023-h2o} and SnapKV~\citep{li-nips-2024-snapkv}, we use attention scores from both recent windows and cumulative history. For recent windows, we adopt sizes of 8, 16, and 32, along with a window size equal to the update length. For cumulative history, we aggregate attention scores across chunks using two variants: a direct sum and a decayed sum with a per-chunk decay factor of 0.9. For GQA, we concatenate the attention features of all heads within a group as the final attention feature $\mathbf{a}_n\in \mathbb{R}^{6g}$.
    \item \emph{KV Representations}: We directly concatenate the hidden states of keys $\mathbf {k}_n\in\mathbb R^D$ and values $\mathbf {v}_n\in\mathbb R^D$ without additional transformation.
\end{itemize}
Thus, the input feature $\mathbf x_n = \text{Concat}([\mathbf k_n,\mathbf v_n, \mathbf a_n]) \in \mathbb R^{6g+2D}$. Subsequently, the input feature is sent into the MLP model $\pi_\phi$ to obtain the importance scores:
\begin{equation}
    \phi_n  = \pi_\theta(\mathbf x_n) 
    = \sigma(\mathbf{x}_n\bm{W}_1 +\mathbf{b}_1)\bm{W}_2+\mathbf{b}_2,
\end{equation}
where $\bm{W}_1 \in \mathbb R^{(6g+2D)\times H}$, $\bm{W}_2 \in \mathbb R^{H\times1}$, $\bm{b}_1 \in \mathbb R^{H}$, $\bm{b}_2 \in \mathbb R^{1}$, $H$ is the immediate size.

\subsection{Sampling Algorithm}

After computing the importance score of each KV pair, we apply a Top-$K$ multinomial sampling strategy to determine which pairs to evict. The procedure consists of two steps: (i) we first construct a candidate set by selecting the $K'$ pairs with the lowest predicted importance scores; (ii) we then normalize their negative scores with a softmax function and perform multinomial sampling to select $K$ pairs for eviction. This strategy avoids committing deterministically to potentially noisy local rankings while preventing unrestricted multinomial sampling from evicting highly important KV pairs. As shown in Section~\ref{sec:ablation}, it outperforms both standard top-$K$ and pure multinomial sampling, while introducing controlled stochasticity that encourages exploration and is particularly suitable for reinforcement learning settings.

\section{Analysis of ForesightKV for Capturing Attention Patterns}\label{app:attention_pattern}

As discussed in Section~\ref{sec:importance_drift}, the complex attention patterns can not be easily captured by the KV cache eviction algorithms. To evaluate how different KV cache eviction algorithms capture attention patterns, we measure the average similarity of each attention head's outputs before and after applying KV cache compression, as shown in the Table~\ref{tab:attention_simi}. We can observe that, compared with other methods, ForesightKV results in smaller changes in the outputs of attention heads. This indicates that our method can better capture attention patterns and more closely approximate the original distribution, thereby achieving better performance.

\begin{table}[htb]
    \caption{Averaged cosine similairty of attention outputs before and after KV cache eviction.}
    \label{tab:attention_simi}
    \centering
    \begin{tabular}{ccccc}
        \toprule
         Budgets& ForesightKV&RKV&SnapKV&H2O \\\midrule
         1K	&0.9736	&0.9628&	0.9620	&0.9711\\
2K	&0.9889	&0.9782	&0.9816	&0.9845\\\bottomrule
    \end{tabular}
    
\end{table}

\section{Analysis of Low-Entropy Tokens}
\label{app:entropy}

We take samples from the reasoning traces and use Qwen3-4B with R-KV to predict the token at each position based on the maximum probability on math tasks, and then compare these predictions with the original ones. We present the token changes of the top loss increase in low-entropy tokens in Table~\ref{tab:error}. We can observe that these tokens are often related to factual errors, \eg numbers and symbols. These errors may influence the following reasoning results.

\begin{table}[htb]
\caption{Error predictions of low-entropy tokens. The \textcolor{red}{red} and \textcolor{blue}{blue} denote the predicted and original tokens in the sentence.}
    \label{tab:error}
    \centering
    \begin{tabular}{cc}\toprule
         Golden&8 × 9 × 5 × 7 = \textcolor{blue}{2520}.\\
    Prediction&8 × 9 × 5 × 7 = \textcolor{red}{8520}.\\\midrule
    Golden& z = f(x)f(y) \textcolor{blue}{=} (-1)(-1) = 1.\\
    Prediction &z = f(x)f(y) \textcolor{red}{+} (-1)(-1) = 1.\\
\bottomrule
    \end{tabular}
    
\end{table}

\section{Experiments Details}
\label{app:experiments}

\subsection{Training Setup} 
\label{app:training}
\paragraph{Data Preparation.}
We begin by preparing the training data using the Qwen3-4B model to generate reasoning traces on the STILL dataset~\citep{min-arxiv-2024-imitate}. The generation process employs the same settings as those used during the evaluation phase. From the generated outputs, we select only the corrected traces whose lengths exceed 4096 tokens. This curated collection of long, corrected reasoning traces serves as the dataset for the subsequent supervised training and reinforcement learning stages. All the scoring models for each LLM are trained on the same datasets independently.

\paragraph{Supervised Training.}
Following data preparation, we conduct supervised fine-tuning on the filtered dataset. The training is performed with a batch size of $8$ and a learning rate of $1e^{-2}$ for a total of $1000$ steps, utilizing a Cosine learning rate scheduler to anneal the learning rate. For our proposed method, we set the hyperparameter $m$ for the Pairwise Ranking Loss to 0.01. The budget $B$ is configured to $1024$, and the eviction length $L$ is set to $256$.

\paragraph{Reinforcement Learning.}
In the reinforcement learning phase, we configure the training with a batch size and a mini-batch size of 32. We employ a fixed learning rate of $3e^{-4}$ with a $10$ step of warmup and train the model for $200$ steps. For each sample in a batch, we generate $8$ eviction trajectories, and each batch of data is trained for one epoch without a mini-batch size. To balance efficiency and performance, the budget $B$ and eviction length $L$ are set to either (1024,256) or (2048, 512), conditional on whether the sequence length is shorter or longer than 12K tokens. The KL penalty coefficient $\beta$ is set to $0.01$ for Qwen3-4B, $0.03$ for Qwen3-1.7B, and $0.1$ for DeepSeek-R1-Distill-Qwen-7B. Finally, the margin $\eta$ in the reward function is set to 1.5 while $\epsilon$ is set to $0.2$. 

\paragraph{Training Cost.}
For DeepSeek-R1-Distill-Qwen-7B under the 32K setting, the approximate wall-clock cost of the two-stage training on H800 GPUs is about $8\mathrm{h}\times 2$ GPUs for the supervised stage and about $6\mathrm{h}\times 8$ GPUs for the reinforcement learning stage.

\subsection{Baselines}
\label{app:baselines}

We set H2O~\citep{zhang-nips-2023-h2o}, SnapKV~\citep{li-nips-2024-snapkv}, and R-KV~\citep{cai-arxiv-2025-rkv} as baselines.
\begin{itemize}
    \item \emph{H2O}. H2O uses cumulative attention scores as the evaluation metric. At each step, the one with the highest cumulative attention score is retained. To maintain consistency, we also keep a budget of $B$ and perform pruning every $L$ tokens.

    \item \emph{SnapKV}. SnapKV is a KV cache compression method for long inputs, which uses the last segment of queries as an observation window and determines importance based on their attention scores. \citet{cai-arxiv-2025-rkv} transforms this into a compression method targeting long outputs, assessing the importance of the current step every $L$ tokens by utilizing the attention scores of the final 8 tokens.

    \item \emph{R-KV}. Based on the aforementioned SnapKV, R-KV also takes into account the redundancy among tokens. It weighs the metrics of window attention score and token-to-token similarity to serve as the importance judgment for a KV pair. The weights for each are 0.1 and 0.9, respectively.
\end{itemize}

\begin{figure}[htb]
    \centering
    \includegraphics[width=\linewidth]{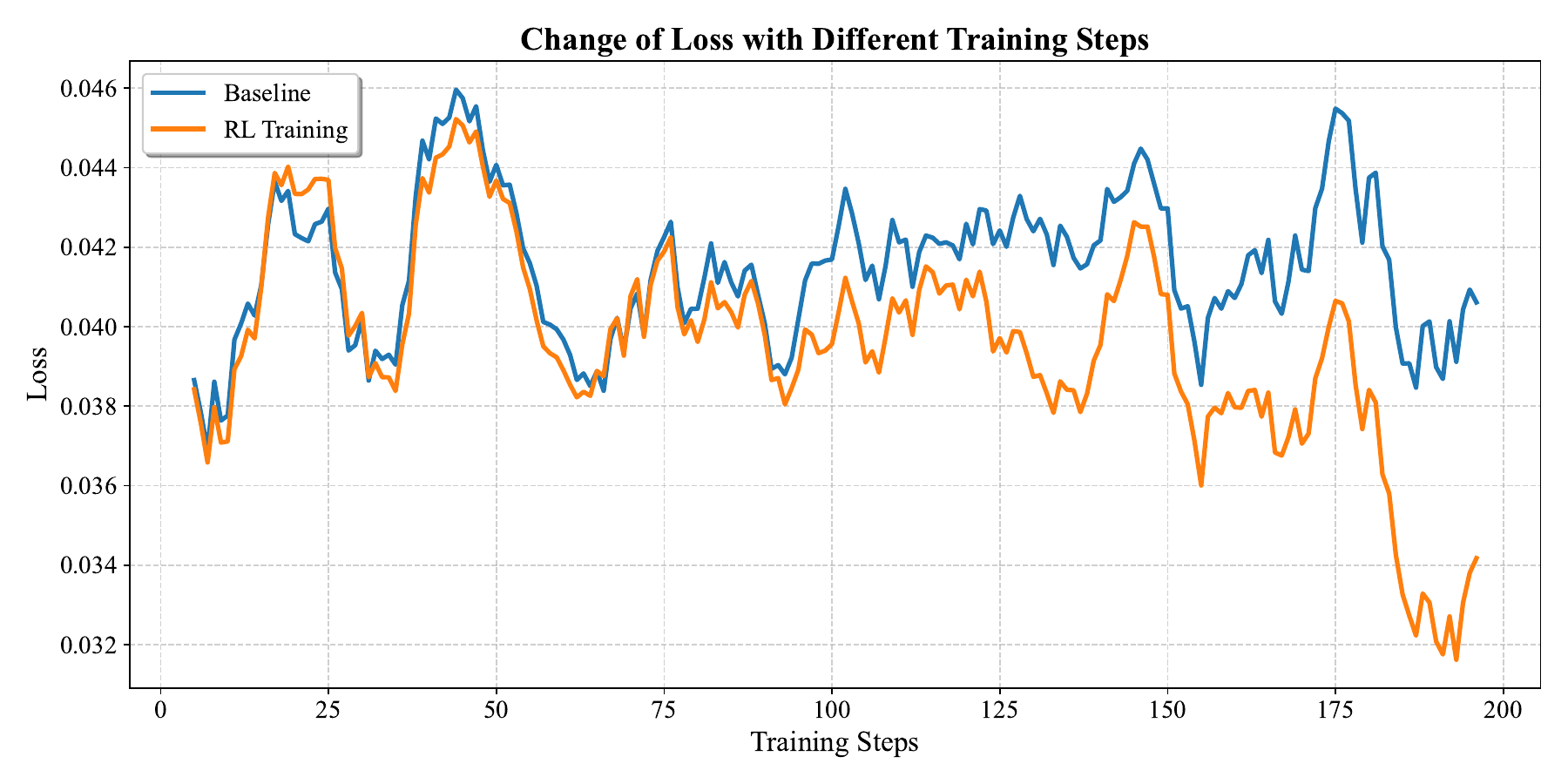}
    \caption{Change of losses with different training steps.}
    \label{fig:training}
\end{figure}
\subsection{Reward Changes Across Training}

We present the changes of loss $\mathcal L_{\text{ours}}$ with the different training steps. As shown in Figure~\ref{fig:training}, we can observe that during the initial training phase, the model exhibited unstable loss behavior, fluctuating relative to the original model. After a certain number of training steps, the loss consistently decreased, ultimately yielding better results. This indicates that the policy models can achieve effective exploration to improve the reasoning qualities.

\subsection{Reward Definition and Entropy Threshold Sensitivity}
\label{app:reward_sensitivity}

Table~\ref{tab:reward} already compares several reward variants, where the adopted reward achieves the best or tied-best performance in the main setting. We further conduct two sensitivity studies to analyze the robustness of the entropy threshold and the reward definition.

First, we vary the low-entropy threshold on Qwen3-4B with a 1K KV budget and evaluate on AIME2024. Specifically, we compare using all tokens, the lowest-entropy 50\% tokens, and the lowest-entropy 80\% tokens for reward computation. As shown in Table~\ref{tab:entropy_threshold}, using all tokens still improves the performance, but is slightly worse than focusing on low-entropy tokens only. In contrast, using only the lowest-entropy 50\% tokens performs worse because too few tokens contribute to the reward, which weakens the training signal. The lowest-entropy 80\% setting achieves the best performance.

\begin{table}[htb]
    \centering
    \caption{Sensitivity analysis of the low-entropy threshold on Qwen3-4B with a 1K KV budget, evaluated on AIME2024.}
    \label{tab:entropy_threshold}
    \begin{tabular}{lccc}
    \toprule
    Threshold & All (100\%) & Low 50\% & Low 80\% \\\midrule
    AIME24 & 52.5 & 51.0 & \textbf{54.5} \\\bottomrule
    \end{tabular}
\end{table}

Second, we compare $\mathcal L_{\text{low,large}}$ and $\mathcal L_{\text{ours}}$ across different model families and evaluation domains. This comparison is intended to test whether the advantage of the adopted reward only holds in the main math-reasoning setting, or whether it also transfers to other reasoning tasks and backbone models. As shown in Table~\ref{tab:reward_cross_setting}, $\mathcal L_{\text{ours}}$ consistently outperforms $\mathcal L_{\text{low,large}}$ under both settings. On the science-domain setting, where Qwen3-4B is trained on SCP-116K and evaluated on GPQA, $\mathcal L_{\text{ours}}$ improves the score from 46.71 to 49.68. On the math-domain setting with DeepSeek-R1-Distill-Qwen-7B evaluated on AIME2024, it also yields a positive gain from 44.0 to 44.6. These results suggest that explicitly incorporating our reward formulation provides a more reliable training signal than only emphasizing low-entropy tokens with large changes, and that the reward design is not specific to one model or one domain.

\begin{table}[htb]
    \centering
    \caption{Comparison of reward definitions across models and domains. All experiments use a 1K KV budget. Higher values indicate better task performance.}
    \label{tab:reward_cross_setting}
    \resizebox{0.65\linewidth}{!}{\begin{tabular}{lcc}
    \toprule
    Setting & $\mathcal L_{\text{low,large}}$ & $\mathcal L_{\text{ours}}$ \\\midrule
    Qwen3-4B / Science: trained on SCP-116K, tested on GPQA & 46.71 & \textbf{49.68} \\
    DeepSeek-R1-Distill-Qwen-7B / Math: tested on AIME2024 & 44.0 & \textbf{44.6} \\\bottomrule
    \end{tabular}}
\end{table}

Overall, these results support both the robustness of the entropy threshold and the effectiveness of the adopted reward design across models and tasks.

\subsection{Length Transfer}
\label{app:length_transfer}

In the main setting, the training data uses a maximum sequence length of 32K. To evaluate length transfer, we train another model using data truncated to a maximum length of 8192 tokens, and test it under the same AIME2024 setting with a 1K KV budget. As shown in Table~\ref{tab:length_transfer}, training with the shorter 8192-token maximum length achieves a score of 52.1, which is comparable to the 51.7 score obtained with 32K-length training. This result suggests that ForesightKV does not rely on exactly matching the training and evaluation length scales, and that the learned future-aware eviction behavior can generalize across sequence lengths.

\begin{table}[htb]
    \centering
    \caption{Length-transfer evaluation on AIME2024 with a 1K KV budget.}
    \label{tab:length_transfer}
    \begin{tabular}{lcc}
    \toprule
    Training Max Length & 8K & 32K \\\midrule
    AIME2024 & \textbf{52.1} & 51.7 \\\bottomrule
    \end{tabular}
\end{table}

\subsection{Experiments with MiniCPM-4.1-8B}\label{app:baselines_experiments}
To verify the generalizability of our method beyond the Qwen series, we further evaluate it on MiniCPM-4-1-8B~\citep{xiao-arxiv-2025-minicpm} with a context length of 32K, as detailed in Table~\ref{tab:minicpm}. The results confirm that our approach consistently outperforms baseline methods on this architecture. Notably, due to the significantly longer generation lengths inherent to MiniCPM compared to the Qwen models, we observe that a larger KV cache budget is requisite to maintain optimal performance.

\begin{table}[htb]
    \centering
    \caption{Performance evaluation of MiniCPM-4.1-8B on AIME 24 benchmark.}
{
    \label{tab:minicpm}
\begin{tabular}{cccc}\toprule
                     Method              & 2K   & 4K   & 8K   \\\midrule
Full                               &      &63.3 &      \\\midrule
R-KV & 13.3& 36.7& 50.0\\
SnapKV & 10.0& 33.3& 53.3\\
ForesightKV(\wo RL) & 43.3 & 53.3 & 63.3 \\
\bottomrule
\end{tabular}}
\end{table}

\subsection{Scalability to Larger Models}
\label{app:scaling}

We further evaluate whether the proposed mechanism scales beyond the 4B--7B model range used in the main experiments. To this end, we conduct additional experiments on Qwen3-14B and supplementary analyses on Qwen3-14B and Qwen3-32B. These results show that ForesightKV is not limited to smaller backbones, and that the same qualitative trends continue to hold at larger model scales.

First, we evaluate Qwen3-14B on AIME2024 with a 2048 KV budget. As shown in Table~\ref{tab:qwen14b_scaling}, ForesightKV substantially outperforms R-KV under the same budget. In addition, ForesightKV without RL already achieves strong performance, while RL further brings a consistent improvement. This is consistent with our interpretation in the main text: the major gain comes from the future-aware eviction design, and the RL stage further refines the learned policy.

\begin{table}[htb]
    \centering
    \caption{Scalability evaluation on Qwen3-14B with a 2048 KV budget, evaluated on AIME2024.}
    \label{tab:qwen14b_scaling}
    \begin{tabular}{lcccc}
    \toprule
    Model & Budget & R-KV & ForesightKV(\wo RL) & ForesightKV \\\midrule
    Qwen3-14B & 2048 & 43.6 & 72.9 & \textbf{73.8} \\\bottomrule
    \end{tabular}
\end{table}

Second, we compare different eviction methods on Qwen3-14B and Qwen3-32B using the LM loss ratio relative to FullKV. As shown in Table~\ref{tab:large_model_loss_ratio}, the trend is consistent with what we observe on Qwen3-4B: Golden Eviction remains much closer to FullKV, while H2O, SnapKV, and R-KV incur clearly larger loss increases. This suggests that future-aware supervision continues to better capture long-term KV importance even at larger model scales.

\begin{table}[htb]
    \centering
    \caption{LM loss ratio relative to FullKV on larger models.}
    \label{tab:large_model_loss_ratio}
    \resizebox{0.75\linewidth}{!}{\begin{tabular}{lcccc}
    \toprule
    Model & Golden Eviction / FullKV & H2O / FullKV & SnapKV / FullKV & R-KV / FullKV \\\midrule
    Qwen3-14B & \textbf{1.0447} & 1.1859 & 1.2191 & 1.2721 \\
    Qwen3-32B & \textbf{1.0315} & 1.1355 & 1.1581 & 1.2052 \\\bottomrule
    \end{tabular}}
\end{table}

Finally, we analyze entropy buckets on Qwen3-14B and Qwen3-32B. As shown in Table~\ref{tab:large_model_entropy}, the same pattern as on Qwen3-4B still holds: under R-KV, low-entropy tokens suffer substantially larger loss increases than high-entropy tokens. This observation supports our core motivation that low-entropy tokens can be disproportionately important for preserving reasoning quality, and that this phenomenon is not specific to small models.

\begin{table}[htb]
    \centering
    \caption{Lm loss of tokens of different entropies under R-KV on larger models.}
    \label{tab:large_model_entropy}
    \resizebox{0.75\linewidth}{!}{\begin{tabular}{lccc}
    \toprule
    Model & All Tokens / FullKV & Top 20\% Entropy Tokens / FullKV & Bottom 80\% Entropy Tokens / FullKV \\\midrule
    Qwen3-14B & 1.3039 & 1.2397 & 1.4646 \\
    Qwen3-32B & 1.2300 & 1.1944 & 1.3081 \\\bottomrule
    \end{tabular}}
\end{table}

Overall, these results support the conclusion that ForesightKV's core mechanism generalizes beyond the original model scale. Both the task performance on Qwen3-14B and the loss-ratio analyses on Qwen3-14B/Qwen3-32B indicate that future-aware eviction remains effective as the backbone model becomes larger.

\subsection{Experiments with more Methods}\label{app:method}
To further evaluate the effectiveness of ForesightKV, we conduct a comprehensive comparison with G-KV~\citep{liao-arxiv-2025-gkv}, Duo-Attention~\cite{xiao-iclr-2025-duoattention}, and RPC~\citep{song-arxiv-2025-rpc}. Specifically, we compare ForesightKV with RPC on Qwen3-4B, while comparisons with G-KV and Duo-Attention are performed on DeepSeek-R1-Distill-Qwen-7B. The results on AIME2024 are summarized in Table~\ref{tab:rpc}. The results for G-KV are directly taken from the original paper, and for Duo-Attention, we report performance under compression rates of 25\% and 50\%. All the experiment settings are the same for different methods. As shown in the Table~\ref{tab:rpc}, ForesightKV consistently and substantially outperforms all baseline methods across the evaluated settings, demonstrating its superior effectiveness under comparable or even more aggressive compression scenarios.
\begin{table}[htb]
    \centering
    \caption{Performance comparison with more methods.}\label{tab:rpc}
{
\begin{tabular}{lcccc}\toprule
 Model &      Method     & 1K ($<25\%$)                                           & 2K ($<50\%$)                                             & 4K \\\midrule
\multirow{2}{*}{Qwen3-4B} &RPC        & 10.0 & 36.7 & 59.2 \\
&ForesightKV & 54.5      & 70.2      & 69.2 \\\midrule
\multirow{3}{*}{DeepSeek-R1-Distill-Qwen-7B} &DuoAttention        & 3.3 & 13.3 & - \\
&G-KV & 33.0-35.0& 47.0-49.0& - \\
&ForesightKV & 44.6 & 52.9& 54.3
\\\bottomrule
\end{tabular}}
\end{table}

\subsection{Throughput Analysis}
\label{app:throughput}
\paragraph{End-to-End Speed.}
On DeepSeek-R1-Distill-Qwen-7B, we evaluate the whole cost under the batch size of 8, and each sample is tested 8 times. It takes 8 and 9.5 hours under the budget of 1K and 2K, and 17.5 hours for the full KV cache. Thus, even under the same budget, our method can achieve about 2 times speedup.

\paragraph{Speed of KV Cache Eviction.} We evaluate our method on Qwen3-4B with a budget of 2048. Specifically, we generate sequences of length 32K with a batch size of 64, which takes a total of 5813 seconds. Among this, our KV cache eviction mechanism accounts for only 157 seconds, representing just 2.7\% of the total time, which is effectively negligible.
In contrast, the computation of R-KV takes about 8.1\% of the total time, which is much larger.
Moreover, due to the larger degree of parallelism, reduced attention computation, and lower cache storage overhead, our method achieves significantly higher computational efficiency, which is sufficient to cover the overhead of the scoring model.